\documentclass{article}

\usepackage{microtype}
\usepackage{graphicx}
\usepackage{subcaption}
\usepackage{booktabs} %

\usepackage{hyperref}
\usepackage{xspace}

\usepackage[preprint]{icml2026}

\usepackage{amsmath}
\usepackage{amssymb}
\usepackage{mathtools}
\usepackage{amsthm}

\usepackage[capitalize,noabbrev]{cleveref}

\theoremstyle{plain}
\newtheorem{theorem}{Theorem}[section]
\newtheorem{proposition}[theorem]{Proposition}

\theoremstyle{definition}

\newtheorem{assumption}[theorem]{Assumption}
\theoremstyle{remark}

\usepackage[disable,textsize=tiny]{todonotes}

\newcommand{\methodname}{BEACON\xspace}

\icmltitlerunning{Milestone-Anchored Policy Learning for Long-Horizon Language Agents}

\definecolor{successgreen}{RGB}{76,175,80}
\definecolor{partialorange}{RGB}{255,152,0}
\definecolor{failgray}{RGB}{180,180,180}

\usepackage{enumitem} 
\usepackage{amsmath, amsthm}
\usepackage{multirow}
\usepackage{booktabs}
\usepackage{multirow}
\usepackage{xcolor}
\usepackage[table]{xcolor}  %
\usepackage{graphicx}       %
\usepackage{tabularx} %
\usepackage{ragged2e}         %
\usepackage{soul}  %
\usepackage{soulutf8}

\usepackage{tcolorbox}

\definecolor{lightblue}{RGB}{220,235,250}
\usepackage{array}

\begin{document}

\twocolumn[
  \icmltitle{Milestone-Guided Policy Learning for Long-Horizon Language Agents}

  \icmlsetsymbol{equal}{*}
  \icmlsetsymbol{corresp}{\ensuremath{\dagger}}

  \begin{icmlauthorlist}
    \icmlauthor{Zixuan Wang}{zju,baidu}
    \icmlauthor{Yuchen Yan}{zju}
    \icmlauthor{Hongxing Li}{zju}
    \icmlauthor{Teng Pan}{zju,baidu}
    \icmlauthor{Dingming Li}{zju}
    \icmlauthor{Ruiqing Zhang}{baidu}
    \icmlauthor{Weiming Lu}{zju}
    \icmlauthor{Jun Xiao}{zju}
    \icmlauthor{Yueting Zhuang}{zju}
    \icmlauthor{Yongliang Shen}{corresp,zju}
  \end{icmlauthorlist}

  \icmlaffiliation{zju}{Zhejiang University}
  \icmlaffiliation{baidu}{Baidu Inc.}

  \icmlcorrespondingauthor{Yongliang Shen}{syl@zju.edu.cn}

  \icmlkeywords{Machine Learning, ICML}

  \vskip 0.3in
]

\printAffiliationsAndNotice{$^\dagger$Corresponding author.}

  \newcommand{\revise}[2]{%
    \textcolor{gray}{\st{#1}}~\textcolor{red}{#2}%
  }

  \newtcolorbox{old}{
    colback=gray!10,
    colframe=gray!30,
    coltext=gray,
    boxrule=0pt,
    arc=0pt,
    left=4pt, right=4pt, top=4pt, bottom=4pt
  }

  \newtcolorbox{new}{
    colback=red!5,
    colframe=red!20,
    boxrule=0pt,
    arc=0pt,
    left=4pt, right=4pt, top=4pt, bottom=4pt
  }

\begin{abstract}

While long-horizon agentic tasks require language agents to perform dozens of sequential decisions, training such agents with reinforcement learning remains challenging.
We identify two root causes:
credit misattribution, where correct early actions are penalized due to terminal failures, and sample inefficiency, where scarce successful trajectories result in near-total loss of learning signal.
We introduce a milestone-guided policy learning framework, \textbf{\methodname}, that leverages the compositional structure of long-horizon tasks to ensure precise credit assignment.
{\methodname} partitions trajectories at milestone boundaries, applies temporal reward shaping within segments to credit partial progress, and estimates advantages at dual scales to 
prevent distant failures from corrupting the evaluation of local actions.
On ALFWorld, WebShop, and ScienceWorld, {\methodname} consistently outperforms GRPO and GiGPO. 
Notably, on long-horizon ALFWorld tasks, {\methodname} achieves 92.9\% success rate, nearly doubling GRPO's 53.5\%, while improving effective sample utilization from 23.7\% to 82.0\%. 
These results establish milestone-anchored credit assignment as an effective paradigm for training long-horizon language agents. 
Code is available at \url{https://github.com/ZJU-REAL/BEACON}.
\end{abstract}

\section{Introduction}

Large language model agents have demonstrated remarkable capabilities in performing complex tasks in diverse environments~\citep{yao2023reactsynergizingreasoningacting, schick2023toolformerlanguagemodelsteach}, including web navigation ~\citep{zhou2023webarena,deng2023mind2web}, embodied control ~\citep{Ahn2022DoAI,huang2022inner,wang2025omniear}, and scientific experimentation ~\citep{Boiko2023AutonomousCR,Bran2023ChemCrowAL}.
These agents must perform sequences of decisions that span dozens of steps, with success determined only at task completion.
Training such agents through reinforcement learning has shown promise ~\citep{zhang2025landscapeagenticreinforcementlearning,Ouyang2022TrainingLM}, yet current policy optimization methods scale poorly with task horizon, exhibiting systematic performance collapse as decision sequences lengthen.

This collapse stems from two fundamental limitations of trajectory-level optimization, which treats trajectories as flat action sequences and assigns credit based solely on terminal outcomes.
The first is \emph{credit misattribution}: all actions within a trajectory receive identical advantages based solely on the terminal outcome. A correct early action is penalized when later actions cause failure; the same action receives opposite gradient signals across trajectories depending on downstream stochasticity, causing gradients to conflict.
The second is \emph{sample inefficiency}: 
as task horizons extend, successful trajectories become increasingly scarce, causing most samples to yield zero reward. Moreover, trajectories that complete substantial subgoals but fail the final objective receive zero reward identical to complete failures, wasting meaningful progress.
We validate these limitations on ALFWorld~\citep{ALFWorld20}: GRPO~\citep{shao2024deepseekmathpushinglimitsmathematical} achieves 77\% success on short tasks but collapses to 54\% on long tasks, with over 40\% of gradient updates containing contradictory signals. Furthermore, 39\% of sampled trajectories complete at least one subgoal yet contribute no learning signal under trajectory-level optimization.

\begin{figure*}[t]
    \centering
    \begin{minipage}[c]{0.75\linewidth}
        \centering
        \includegraphics[width=\linewidth]{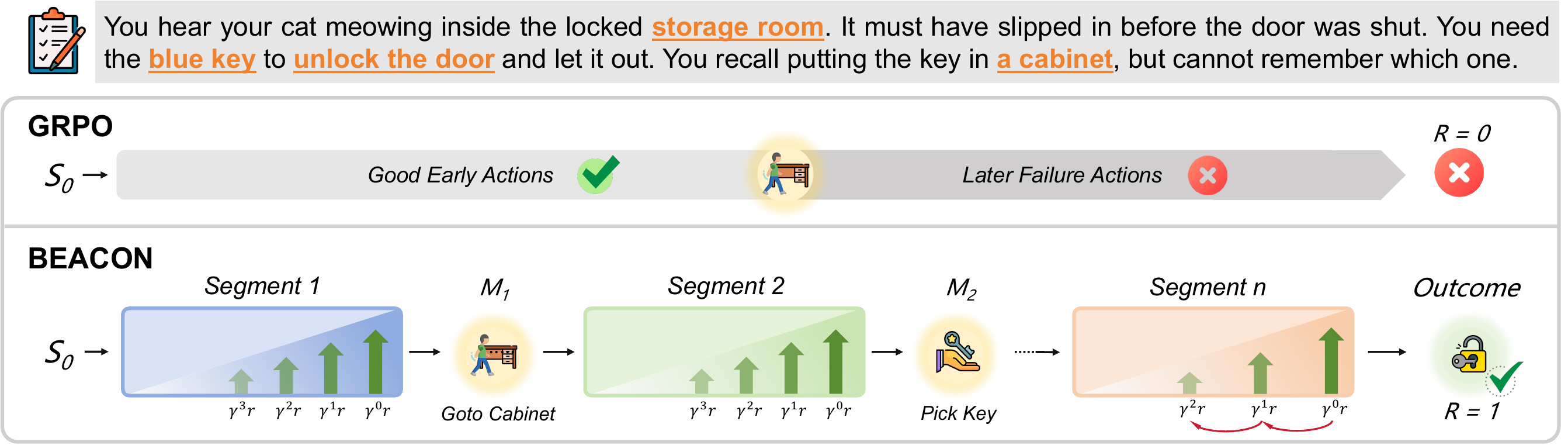}
    \end{minipage}
    \hspace{0.01\linewidth}
    \begin{minipage}[c]{0.2\linewidth}
        \centering
        \includegraphics[width=0.95\linewidth]{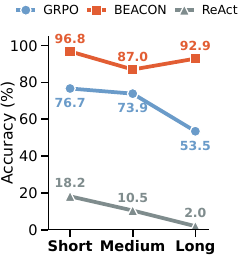}
    \end{minipage}
    \caption{\textbf{\methodname overview and performance preview.} \textbf{Left:} GRPO assigns uniform credit from terminal outcomes, penalizing correct early actions when later actions fail; \methodname partitions trajectories at milestones and estimates advantages at dual scales. \textbf{Right:} On ALFWorld, GRPO degrades sharply with task horizon while \methodname maintains robust performance across all horizons.}
    \label{fig:intro}
\end{figure*}

Existing methods that aim to provide denser credit assignment introduce their own limitations. Process reward models~\citep{lightman2023letsverifystepstep,wang-etal-2024-math} require expensive step-level annotations and risk reward hacking~\citep{Gao2022ScalingLF}. Monte Carlo value estimation~\citep{Kazemnejad2024:VinePPO} demands multiple rollouts per decision point, multiplying computational cost. GiGPO~\citep{feng2025groupingrouppolicyoptimizationllm} constructs step-level comparison groups by identifying repeated states across trajectories, but its effectiveness depends on state recurrence, which diminishes as agents progress toward task completion in long-horizon settings. We observe that long-horizon agentic tasks already exhibit exploitable structure:
they decompose into phases bounded by \emph{milestones}, state transitions where subgoal achievement renders prior execution history largely irrelevant.
This approximate Markov property enables credit to be decoupled across phases, yet trajectory-level methods ignore it entirely. 

We introduce Milestone-Guided Policy Learning Framework (\methodname), which leverages task structure to address both credit misattribution and sample inefficiency. The key idea is to partition trajectories at milestone boundaries and perform credit assignment at the segment level rather than the trajectory level. Given a trajectory, \methodname first identifies milestones from verifiable state changes, and partitions the trajectory into segments accordingly. Within each segment, temporal reward shaping assigns higher credit to actions closer to milestone completion, transforming sparse terminal signals into dense feedback that rewards partial progress. Across segments, dual-scale advantage estimation computes advantages at both trajectory and segment levels. The trajectory-level advantage captures global task performance, while the segment-level advantage compares only among trajectories that reached the same milestone, isolating local action quality from the variance introduced by subsequent segments. This decomposition ensures that a correct action in an early segment is not penalized by failures in later segments, directly addressing credit misattribution.

We evaluate \methodname on ALFWorld ~\citep{ALFWorld20}, WebShop ~\citep{Yao2022WebShopTS}, and ScienceWorld ~\citep{wang-etal-2022-scienceworld}. \methodname outperforms GRPO across all benchmarks, with improvements that amplify as task horizons extend: relative gains over GRPO scale from 26.2\% on short tasks to 73.6\% on long tasks on ALFWorld. On Long tasks, \methodname achieves 92.9\% success versus 53.5\% for GRPO. Analysis reveals that \methodname recovers learning signal from partial successes: effective sample utilization improves from 23.7\% to 82.0\%. 
Furthermore, \methodname achieves 91.4\% success compared to 43\% for supervised fine-tuning on oracle trajectories, confirming that the gains stem from policy optimization rather than milestone imitation.

In summary, our contributions are as follows:
\begin{itemize}[nosep, leftmargin=*]
\item This work identifies credit misattribution and sample inefficiency as
fundamental limitations of trajectory-level optimization, showing that over
40\% of gradient updates contain contradictory signals as task horizons
extend.
\item We propose \methodname, a framework that partitions trajectories at milestone
boundaries, applies temporal reward shaping within segments, and estimates
advantages at dual scales to isolate local action quality from later failures.
\item Experiments on ALFWorld, WebShop, and ScienceWorld demonstrate
horizon-dependent improvements, with relative gains over GRPO scaling from
26.2\% to 73.6\% and sample utilization improving from 23.7\% to 82.0\%.
\end{itemize}

\section{Failures in Flat Trajectory Optimization}
\label{sec:diagnosis}

We first establish empirically that trajectory-level policy optimization fails systematically as task horizons extend, then diagnose the underlying causes through gradient analysis. 

Experiments use Qwen2.5-1.5B-Instruct~\citep{qwen2025qwen25technicalreport} on ALFWorld with GRPO, stratifying tasks by optimal trajectory length: Short ($L^* \leq 4$), Medium ($5 \leq L^* \leq 7$), and Long ($L^* > 7$) (details in Section~\ref{sec:exp_setup}). Figure~\ref{fig:intro}(Right) shows GRPO degrades from 76.7\% (Short) to 53.5\% (Long).

\begin{figure}[t]
    \centering
    \includegraphics[width=0.9\linewidth]{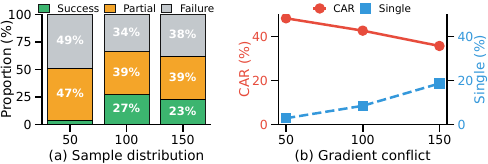}

    \caption{\textbf{Failures in flat trajectory optimization.} \textbf{(a)}~Sample distribution during GRPO training. Partial successes yield zero gradient despite meaningful progress. \textbf{(b)}~Gradient conflict analysis. Contradictory signals cause effective learning signal to collapse.}
    \label{fig:diagnosis}
\end{figure}

\paragraph{Sample Inefficiency.} Figure~\ref{fig:diagnosis}(a) shows the sampled trajectory distribution during training. We categorize trajectories into three types: full successes (green) that complete the task, partial successes (orange) that complete at least one milestone but fail the final task, and complete failures (gray) that achieving none. Partial successes consistently comprise 39--47\% of samples throughout training, yet under GRPO they receive zero reward identical to complete failures. Meanwhile, full successes remain below 27\%, meaning over 73\% of samples yield no learning signal. This waste of partial progress severely limits learning efficiency.

\paragraph{Credit Misattribution.} Even among trajectories that do provide signal, credit assignment is corrupted. 
Figure~\ref{fig:diagnosis}(b) reveals a second pathology: gradient corruption from contradictory credit assignment. We measure the Contradictory Action Ratio (CAR), defined as the fraction of actions that receive opposite-sign advantages across different trajectories despite being executed at identical states. CAR exceeds 40\% at its peak, indicating that nearly half of gradient updates for repeated state-action pairs point in conflicting directions.
As a consequence, the effective learning signal (the fraction of gradient that survives after cancellation) collapses below 20\% (see Appendix~\ref{app:metrics} for detailed computation). The root cause is that trajectory-level advantages conflate action quality with downstream stochasticity: the same correct action receives positive credit when later actions succeed and negative credit when they fail.

\paragraph{Takeaways.}

Flat trajectory optimization suffers from two compounding problems. Sample inefficiency discards learning signal from partial successes, while credit misattribution corrupts the signal that remains. Both problems worsen as horizons extend: longer tasks have lower success rates (increasing partial successes) and more opportunities for downstream variance to corrupt credit assignment. Addressing these failures requires exploiting the compositional structure that trajectory-level methods ignore.

\section{Milestone-Anchored Policy Optimization}
\label{sec:method}

We introduce \methodname, a framework that exploits the compositional structure of long-horizon tasks to address the credit assignment failures identified in Section~\ref{sec:diagnosis}. \methodname operates in three stages: partitioning trajectories at milestone boundaries, shaping rewards within segments, and estimating advantages at dual scales.

\begin{figure*}[t]
    \centering
    \includegraphics[width=0.8\textwidth, trim={0 0 0 0}, clip]{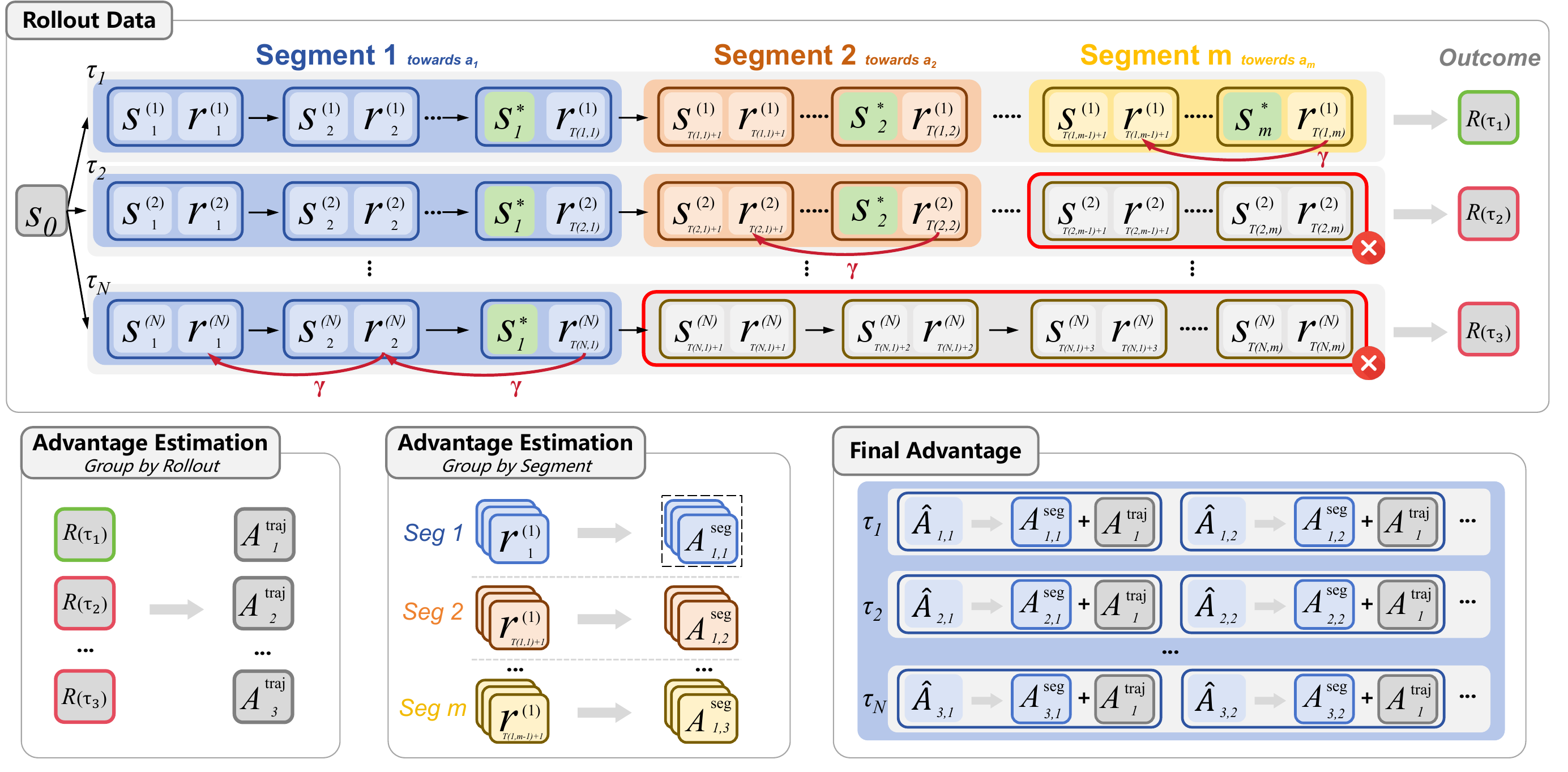}
    \caption{\textbf{The \methodname framework.} \textbf{Top:} Trajectory partitioning divides rollouts into segments at milestone boundaries; temporal reward decay (factor $\gamma$) assigns higher credit to actions closer to milestone completion. \textbf{Bottom:} Dual-scale advantage estimation computes trajectory-level advantages by comparing terminal outcomes (left), segment-level advantages by comparing returns within milestone-matched groups (middle), and combines both scales for final credit assignment (right).}
    \label{fig:method}
\end{figure*}

\subsection{Preliminaries}

We consider a Markov Decision Process $(\mathcal{S}, \mathcal{A}, \mathcal{P}, \mathcal{R}, \gamma)$ where a language agent policy $\pi_\theta$ produces trajectories $\tau = \{(s_t, a_t)\}_{t=1}^{T}$ through interaction with an environment. The agent receives sparse terminal reward $R(\tau) \in \{0, 1\}$ indicating task success.

We assume access to a milestone indicator $\Phi: \mathcal{S} \times \mathcal{A} \times \mathcal{S} \rightarrow \{0, 1\}$ that returns 1 when a transition completes a semantic subgoal, and 0 otherwise.
Crucially, $\Phi$ does not require learned models or manual annotation—it detects observable state changes from environment feedback. In interactive environments, such signals are typically available: in ALFWorld, $\Phi$ detects object state changes such as successful pick-up or heating completion; in WebShop, $\Phi$ identifies page transitions advancing toward the target product; in ScienceWorld, the environment provides explicit subgoal signals that $\Phi$ directly consumes.

\subsection{Trajectory Partitioning}
Long-horizon tasks naturally decompose into phases bounded by milestone states. Given trajectory $\tau$, applying $\Phi$ to each transition yields milestone timestamps $\mathcal{M} = \{t_1, \ldots, t_K\}$ where $K$ is the number of milestones reached. Setting $t_0 = 0$ and $t_{K+1} = T$, we partition $\tau$ into $K+1$ segments:
\begin{equation}
\small
\text{Seg}_k = \{(s_t, a_t) : t_{k-1} < t \leq t_k\}, \quad k \in \{1, \ldots, K+1\}.
\end{equation}

We partition at milestone boundaries based on the following structural assumption:
\begin{assumption}[Milestone Markov Property]\label{assump:markov}
For milestone state $s_{t_k}$ reached at timestep $t_k$:
\begin{equation}
\begin{split}
P(\textup{Seg}_{k+1}, \ldots, \textup{Seg}_{K+1} \mid s_{t_k}, \textup{Seg}_1, \ldots, \textup{Seg}_k) \\
\approx P(\textup{Seg}_{k+1}, \ldots, \textup{Seg}_{K+1} \mid s_{t_k}).
\end{split}
\end{equation}
\end{assumption}
This assumption states that conditioned on reaching a milestone state, future trajectory distribution depends primarily on remaining subgoals rather than the full history. This is natural for compositional tasks: once an object is picked up, subsequent success depends on what to do next, not on how the object was found. We discuss the validity and limitations of this assumption in Appendix~\ref{app:assumptions}.

\subsection{Temporal Reward Shaping}

Partitioning alone does not address sample inefficiency, since segments in failed trajectories still receive zero reward. We assign shaped rewards crediting partial progress.

For action $a_t$ in segment $\text{Seg}_k$ of trajectory $\tau_i$ with $K_i$ completed milestones:
\begin{equation}
r_t = \begin{cases} R_{\text{ms}} \cdot \gamma^{t_k - t} & \text{if } k \leq K_i \\ 0 & \text{if } k = K_i + 1 \end{cases},
\end{equation}
where $R_{\text{ms}} > 0$ is the milestone reward and $\gamma \in (0,1)$ is the temporal decay factor. Only segments that end with a completed milestone receive positive reward. 
This design has two properties: (1) all actions in completed segments receive positive reward, enabling learning from partial successes; (2) actions closer to milestone completion receive higher credit, encouraging efficient execution.

\subsection{Dual-Scale Advantage Estimation}

Temporal reward shaping provides dense signal but does not fully resolve credit misattribution: actions in early segments may still receive credit influenced by outcomes in later segments through trajectory-level comparison. We address this through dual-scale advantage estimation.

\paragraph{Trajectory-Level Advantage.} For a group of $G$ trajectories $\{\tau_i\}_{i=1}^{G}$ sampled for the same task, the trajectory-level advantage follows GRPO:
\begin{equation}
A^{\text{traj}}_i = \frac{R(\tau_i) - \mu}{\sigma + \epsilon},
\end{equation}
where $\mu$ and $\sigma$ are the mean and standard deviation of terminal rewards across the group.

\paragraph{Segment-Level Advantage.} 
Trajectory-level comparison assigns identical credit to all actions regardless of position.
To isolate local action quality from downstream variance, we compare segment performance only among trajectories that reached the same milestone. Define the comparison group for milestone $k$ as $\mathcal{G}_k = \{i : K_i \geq k\}$, where $K_i$ is the number of milestones reached by trajectory $\tau_i$. The segment return is:
\begin{equation}
R_k^{(i)} = \sum_{t \in \text{Seg}_k^{(i)}} r_t.
\end{equation}
The segment-level advantage compares the per-step reward against the group's average per-step return:
\begin{equation}
A^{\text{seg}}_{i,t} = r_t - \frac{1}{|\mathcal{G}_k|} \sum_{j \in \mathcal{G}_k} \frac{R_k^{(j)}}{|\text{Seg}_k^{(j)}|}, \quad t \in \text{Seg}_k^{(i)}.
\end{equation}
By comparing only among trajectories that reached milestone $k$, this advantage isolates the quality of actions within segment $k$ from variance in subsequent segments:

\begin{proposition}[Variance Isolation]
\label{prop:var_isolation}
Under Assumption~\ref{assump:markov}, for trajectories in comparison group $\mathcal{G}_k$:
\begin{equation}
\textup{Cov}_{i \in \mathcal{G}_k}(A^{\textup{seg}}_{i,t}, R_{k'}^{(i)}) \approx 0, \quad \forall i \in \mathcal{G}_k,\, \forall t \in \textup{Seg}_k^{(i)},\, \forall k' > k.
\end{equation}
\end{proposition}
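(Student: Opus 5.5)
The plan is to show that $A^{\text{seg}}_{i,t}$ is (approximately) a measurable function of $\text{Seg}_k^{(i)}$ together with quantities common to the whole group $\mathcal{G}_k$. Then to apply Assumption~\ref{assump:markov} at the milestone $t_k$, which separates $\text{Seg}_k^{(i)}$ from the later segments $\text{Seg}_{k'}^{(i)}$, $k'>k$, once we condition on $s_{t_k}$. Concretely, write $A^{\text{seg}}_{i,t} = r_t - b_k$, where $b_k = \frac{1}{|\mathcal{G}_k|}\sum_{j\in\mathcal{G}_k} R_k^{(j)}/|\text{Seg}_k^{(j)}|$ is the group baseline. For $i\in\mathcal{G}_k$ we have $k\le K_i$, so $r_t = R_{\text{ms}}\gamma^{t_k-t}$ depends only on the offset of $t$ within segment $k$ of trajectory $i$. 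Thus $r_t$ is a deterministic function of $\text{Seg}_k^{(i)}$, and $b_k$ is shared by all members of the group.

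\textbf{Step 1: handle the baseline.} Since $b_k$ is common to all $i\in\mathcal{G}_k$, it is constant with respect to the index $i$ over which the covariance is taken, and so $\text{Cov}_{i}(b_k, R_{k'}^{(i)})=0$ exactly. Hence $\text{Cov}_i(A^{\text{seg}}_{i,t}, R_{k'}^{(i)}) = \text{Cov}_i(r_t^{(i)}, R_{k'}^{(i)})$. If one prefers to read the covariance in the population sense, with $b_k$ random, it is a $1/|\mathcal{G}_k|$-weighted average of per-trajectory quantities. Its covariance with a single $R_{k'}^{(i)}$ therefore decomposes into the $j=i$ term, which is of the same form as below, plus cross terms between independent rollouts, which vanish.

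\textbf{Step 2: conditional independence.} Apply Assumption~\ref{assump:markov} at $t_k$. Conditioned on $s_{t_k}$, the law of $(\text{Seg}_{k+1},\dots,\text{Seg}_{K+1})$ does not depend (approximately) on $\text{Seg}_1,\dots,\text{Seg}_k$. Therefore $r_t^{(i)}$, a function of $\text{Seg}_k^{(i)}$, and $R_{k'}^{(i)}$, a function of later segments, are approximately conditionally independent given $s_{t_k}$. Their conditional covariance is then approximately $0$. By the law of total covariance,
\begin{equation*}
\text{Cov}(r_t, R_{k'}) = \mathbb{E}\big[\text{Cov}(r_t,R_{k'}\mid s_{t_k})\big] + \text{Cov}\big(\mathbb{E}[r_t\mid s_{t_k}],\,\mathbb{E}[R_{k'}\mid s_{t_k}]\big).
\end{equation*}

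\textbf{Step 3, the main obstacle: the second term.} It vanishes when the milestone state $s_{t_k}$ is (approximately) the same for all members of $\mathcal{G}_k$. This is the intended reading of ``reached the same milestone'': the relevant subgoal state, such as the object picked up, is shared by all trajectories in the group, so $\mathbb{E}[R_{k'}\mid s_{t_k}]$ is nearly constant across the group. I would state this as the interpretation of the milestone-matched grouping and bound the residual by the variation of $\mathbb{E}[R_{k'}\mid s_{t_k}]$ across group members. The $\approx$ in the conclusion then inherits two errors: the approximation error in Assumption~\ref{assump:markov}, for example via a total-variation bound times $\sup|r_t|\sup|R_{k'}|\le R_{\text{ms}}^2 |\text{Seg}_{k'}|$, and this milestone-state heterogeneity. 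A secondary subtlety is that membership in $\mathcal{G}_{k'}$, that is, whether segment $k'$ is completed, is itself a future event. This is fine because it is part of the later segments' law and is covered by the same conditional independence.
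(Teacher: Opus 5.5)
Your proposal follows the same route as the paper's proof: the baseline $\bar b_k$ drops out because it is constant over $\mathcal{G}_k$, and the Milestone Markov Property factors the segment-$k$ reward from the later segment return $R_{k'}^{(i)}$. Your Step~3 goes further than the paper, which passes directly from conditional independence given $s_{t_k}$ to the factorization $\mathbb{E}[r_t R_{k'}^{(i)} \mid i\in\mathcal{G}_k]\approx\mathbb{E}[r_t\mid i\in\mathcal{G}_k]\,\mathbb{E}[R_{k'}^{(i)}\mid i\in\mathcal{G}_k]$ and silently drops $\mathrm{Cov}(\mathbb{E}[r_t\mid s_{t_k}],\mathbb{E}[R_{k'}\mid s_{t_k}])$; your requirement that the milestone state be approximately shared across the group is exactly the hidden hypothesis that step needs, so yours is the more careful version.
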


The proof is provided in Appendix~\ref{app:variance_isolation}. This result ensures that credit for actions in segment $k$ is not corrupted by variance in later segments, directly addressing credit misattribution.

\paragraph{Combined Advantage.} The final advantage for action $a_t$ in segment $\text{Seg}_k$ of trajectory $\tau_i$ is:
\begin{equation}
\hat{A}_{i,t} = A^{\text{traj}}_i + \lambda \cdot A^{\text{seg}}_{i,t},
\end{equation}
where $\lambda > 0$ balances global task performance and local segment quality.

\subsection{Optimization}

We optimize the policy using a clipped surrogate objective:
\begin{equation}
\small
\mathcal{J}(\theta) = \mathbb{E}\left[\sum_{t} \min\left(\rho_t \hat{A}_{i,t}, \, \text{clip}(\rho_t, 1-\epsilon, 1+\epsilon) \hat{A}_{i,t}\right)\right]
\end{equation}
where $\rho_t = \pi_\theta(a_t|s_t) / \pi_{\theta_{\text{old}}}(a_t|s_t)$ is the importance ratio. The complete procedure is summarized in Algorithm~\ref{alg:mapo}.

\begin{algorithm}[t]
\small
\caption{\methodname Training}
\label{alg:mapo}
\begin{algorithmic}[1]
\REQUIRE Policy $\pi_\theta$, milestone detector $\Phi$, group size $G$, decay $\gamma$, weight $\lambda$
\FOR{each iteration}
    \STATE \textcolor{green!60!black}{// Sample trajectories}
    \STATE Sample $G$ trajectories $\{\tau_i\}_{i=1}^G$ using $\pi_\theta$
    \FOR{each trajectory $\tau_i$}
        \STATE \textcolor{green!60!black}{// Detect milestones and partition}
        \STATE $\mathcal{M}_i \gets \{t : \Phi(s_t, a_t, s_{t+1}) = 1\}$
        \STATE Partition $\tau_i$ into $\{\text{Seg}_k^{(i)}\}_{k=1}^{K_i+1}$ using $\mathcal{M}_i$
        \STATE \textcolor{green!60!black}{// Compute shaped rewards}
        \STATE $r_t \gets \mathbb{I}[k \leq K_i] \cdot R_{\text{ms}} \cdot \gamma^{t_k - t}$ for each $t \in \text{Seg}_k^{(i)}$
    \ENDFOR
    \STATE \textcolor{green!60!black}{// Compute trajectory-level advantages}
    \STATE $\mu \gets \frac{1}{G}\sum_i R(\tau_i)$, \quad $\sigma \gets \text{std}(\{R(\tau_i)\})$
    \STATE $A^{\text{traj}}_i \gets (R(\tau_i) - \mu) / (\sigma + \epsilon)$ for all $i$
    \STATE \textcolor{green!60!black}{// Compute segment-level advantages}
    \FOR{$k = 1, \ldots, \max_i K_i$}
        \STATE $\mathcal{G}_k \gets \{i : K_i \geq k\}$
        \STATE $A^{\text{seg}}_{i,t} \gets r_t - \frac{1}{|\mathcal{G}_k|} \sum_{j \in \mathcal{G}_k} R_k^{(j)} / |\text{Seg}_k^{(j)}|$ for $t \in \text{Seg}_k^{(i)}, i \in \mathcal{G}_k$
    \ENDFOR
    \STATE \textcolor{green!60!black}{// Combine advantages and update policy}
    \STATE $\hat{A}_{i,t} \gets A^{\text{traj}}_i + \lambda \cdot A^{\text{seg}}_{i,t}$ for each $a_t \in \text{Seg}_k^{(i)}$
    \STATE Update $\theta$ by maximizing $\mathcal{J}(\theta)$
\ENDFOR
\end{algorithmic}
\end{algorithm}

  \begin{table*}[h]
  \centering
  \small
    \caption{\textbf{Main Results.} Performance comparison across benchmarks. By utilizing structural milestones, \textbf{\methodname} achieves state-of-the-art performance, showing particular robustness in Long-horizon tasks on ALFWorld.}
\begin{tabular}{@{}>{\raggedright\arraybackslash}p{1.85cm}@{}>{\raggedright\arraybackslash}p{4.16cm}@{}*{8}{>{\centering\arraybackslash}p{0.94cm}}@{}}
  \toprule
  \multirow{2}{*}{\textbf{Type}} & \multirow{2}{*}{\textbf{Method}} &
  \multicolumn{4}{c}{\textbf{ALFWorld}} &
  \multicolumn{2}{c}{\textbf{SciWorld}} & \multicolumn{2}{c}{\textbf{WebShop}}
  \\
  \cmidrule(lr){3-6} \cmidrule(lr){7-8} \cmidrule(lr){9-10}
  & & Short & Medium & Long & Avg & Score & Succ & Score & Succ \\
  
  \midrule
  \rowcolor{gray!10} \multicolumn{10}{l}{\textit{\textbf{Closed-Source Models}}} \\
  Prompting & GPT-4o (ReAct) & 71.4 & 33.7 & 49.8 & 48.0 & 54.3 & 45.4 & 31.8 & 23.7 \\
  Prompting & Gemini-2.5-Pro (ReAct) & 84.8 & 50.7 & 58.7 & 60.3 & 47.8 & 36.7 & 42.5 & 35.9 \\
  \midrule

  \rowcolor{gray!10} \multicolumn{10}{l}{\textit{\textbf{Base: Qwen2.5-1.5B-Instruct}}} \\
  Prompting & Direct Prompt & 5.8 & 5.1 & 0.0 & 4.1 & 5.9 & 0.7 & 23.1 & 5.2 \\
  Prompting & ReAct~\citep{yao2023reactsynergizingreasoningacting}  & 18.2 & 10.5 & 2.0 & 12.8 & 9.0 & 1.2 & 40.1 & 11.3 \\
Prompting & Reflexion~\citep{shinn2023reflexionlanguageagentsverbal} & 31.8 & 18.9 & 3.7 & 21.8 & 7.1 & 3.9 & 55.8 & 21.9 \\
RL Training & PPO~\citep{schulman2017proximalpolicyoptimizationalgorithms} & 58.2 & 54.0 & 47.4 & 54.4 & 29.3 & 10.9 & 73.8 & 51.5 \\
  RL Training & RLOO~\citep{ahmadian2024basicsrevisitingreinforcestyle} & 78.7 & 67.4 & 56.9 & 69.7 & - & - & 73.9 & 52.1 \\
  RL Training & GRPO~\citep{shao2024deepseekmathpushinglimitsmathematical} & 76.7 & 73.9 & 53.5 & 72.8 & 31.7 & 21.1 & 75.8 & 56.8 \\
  RL Training & GiGPO~\citep{feng2025groupingrouppolicyoptimizationllm} & 90.7 & 84.3 & 79.5 & 86.1 & 35.6 & 25.8 & 83.1 & 65.0 \\
 \rowcolor{lightblue!50}   RL Training &  \textbf{\methodname (Ours)} 
  & \textbf{96.8}{\tiny\textcolor{teal}{$_{+6.1}$}} 
  & \textbf{87.0}{\tiny\textcolor{teal}{$_{+2.7}$}} 
  & \textbf{92.9}{\tiny\textcolor{teal}{$_{+13.4}$}} 
  & \textbf{91.4}{\tiny\textcolor{teal}{$_{+5.3}$}} 
  & \textbf{58.9}{\tiny\textcolor{teal}{$_{+23.3}$}} 
  & \textbf{45.3}{\tiny\textcolor{teal}{$_{+19.5}$}} 
  & \textbf{86.1}{\tiny\textcolor{teal}{$_{+3.0}$}} 
  & \textbf{75.6}{\tiny\textcolor{teal}{$_{+10.6}$}} \\

  \midrule

  \rowcolor{gray!10} \multicolumn{10}{l}{\textit{\textbf{Base: Qwen2.5-7B-Instruct}}} \\
  Prompting & Direct Prompt & 30.2 & 10.3 & 3.2 & 14.8 & 11.4 & 4.2 & 26.4 & 7.8 \\
  Prompting & ReAct~\citep{yao2023reactsynergizingreasoningacting} & 45.0 & 23.4 & 17.6 & 31.2 & 17.4 & 7.8 & 46.2 & 19.5 \\
   Prompting & Reflexion~\citep{shinn2023reflexionlanguageagentsverbal} & 56.5 & 38.4 & 23.8 & 42.7 & 23.4 & 11.7 & 58.1 & 28.8 \\
    RL Training & PPO~\citep{schulman2017proximalpolicyoptimizationalgorithms} & 84.6 & 87.3 & 68.8 & 80.4 & 37.1 & 24.0 & 81.4 & 68.7 \\
    RL Training & RLOO~\citep{ahmadian2024basicsrevisitingreinforcestyle} & 85.1 & 80.2 & 48.9 & 75.5 & - & - & 80.3 & 65.7 \\
  RL Training & GRPO~\citep{shao2024deepseekmathpushinglimitsmathematical} & 84.1 & 79.7 & 64.7 & 77.6 & 61.8 & 49.1 & 79.3 & 66.1 \\
  RL Training & GiGPO~\citep{feng2025groupingrouppolicyoptimizationllm} & 93.6 & 91.8 & 79.2 & 90.8 & 69.2 & 53.4 & 84.4 & 72.8 \\
  \rowcolor{lightblue!50} RL Training & \textbf{\methodname (Ours)} 
  & \textbf{95.1}{\tiny\textcolor{teal}{$_{+1.5}$}} 
  & \textbf{94.9}{\tiny\textcolor{teal}{$_{+3.1}$}} 
  & \textbf{90.0}{\tiny\textcolor{teal}{$_{+10.8}$}} 
  & \textbf{94.5}{\tiny\textcolor{teal}{$_{+3.7}$}} 
  & \textbf{83.7}{\tiny\textcolor{teal}{$_{+14.5}$}} 
  & \textbf{64.3}{\tiny\textcolor{teal}{$_{+10.9}$}} 
  & \textbf{87.7}{\tiny\textcolor{teal}{$_{+3.3}$}} 
  & \textbf{79.7}{\tiny\textcolor{teal}{$_{+6.9}$}} \\

  \bottomrule
  \end{tabular}
  \label{tab:main_results}
  \end{table*}
\vspace{-0.2em}

\section{Experiments}
\label{sec:experiments}

\subsection{Experimental Setup}
\label{sec:exp_setup}

\paragraph{Benchmarks.}
We evaluate on three long-horizon benchmarks: ALFWorld~\citep{ALFWorld20}, ScienceWorld~\citep{wang-etal-2022-scienceworld} and WebShop~\citep{Yao2022WebShopTS}. \textit{ALFWorld} is a text-based embodied environment where agents complete household tasks (e.g., heating objects, cleaning items) through multi-step interaction, receiving only sparse terminal rewards upon task completion. \textit{WebShop} is a web navigation environment with 1.18M products, requiring agents to search, filter, and purchase items matching natural language specifications through compositional understanding of product attributes. \textit{ScienceWorld} is a text-based environment for scientific reasoning, spanning 30 task types across 10 domains, requiring agents to conduct virtual experiments (e.g., measuring melting points, testing electrical conductivity). See Appendix~\ref{app:benchmarks} for details.

\paragraph{Baselines.}
We compare against baselines across paradigms: 
(1) Closed-source models: GPT-4o~\citep{hurst2024gpt} and Gemini-2.5-Pro~\citep{comanici2025gemini}, evaluated under ReAct ~\citep{yao2023reactsynergizingreasoningacting} prompting as reference points for frontier model capabilities.
(2) Prompting methods: ReAct, which guides multi-step reasoning through in-context chain-of-thought without training.
(3) RL training methods: PPO~\citep{schulman2017proximalpolicyoptimizationalgorithms}, a standard actor-critic algorithm, and group-based approaches GRPO~\citep{shao2024deepseekmathpushinglimitsmathematical} and GiGPO~\citep{feng2025groupingrouppolicyoptimizationllm}, which estimate advantages over trajectory groups without learned critics.

\paragraph{Implementation.}
We use Qwen2.5-1.5B-Instruct and Qwen2.5-7B-Instruct~\citep{qwen2025qwen25technicalreport} as base models. For fair comparison, all RL methods use identical training configurations. \methodname-specific parameters ($\gamma=0.95$, $\lambda=1.0$) are fixed across all benchmarks without task-specific tuning. Full details are in Appendix~\ref{app:hyperparams}.

\subsection{Main Results}
\label{sec:main_results}

\paragraph{Overall Performance.} Table~\ref{tab:main_results} presents results. \methodname achieves the highest success rate across all benchmarks and model scales.
On ALFWorld with the 1.5B model, \methodname achieves 91.4\% average success rate, surpassing GiGPO (86.1\%) by 5.3\% and GRPO (72.8\%) by 18.6\%. 
On WebShop, \methodname achieves 75.6\% success rate compared to 65.0\% for GiGPO and 56.8\% for GRPO. On ScienceWorld, \methodname reaches 45.3\% success versus 25.8\% for GiGPO and 21.1\% for GRPO.
Scaling to Qwen2.5-7B yields consistent improvements: \methodname achieves 94.5\% on ALFWorld and 79.7\% on WebShop.
Notably, even the 1.5B \methodname model outperforms closed-source baselines (GPT-4o: 48.0\% on ALFWorld, 23.7\% on WebShop), demonstrating that milestone-anchored credit assignment provides advantages that model scale alone cannot match. 
We provide task-wise breakdown for ALFWorld in Appendix~\ref{app:taskwise}, showing consistent gains across all task types.

\paragraph{Horizon-Dependent Performance.} 
On ALFWorld with the 1.5B model, GRPO exhibits severe degradation as horizon extends: success rate drops from 76.7\% on Short tasks to 53.5\% on Long tasks, a 30\% relative decline.
GiGPO mitigates this partially (90.7\% to 79.5\%, 12.4\% relative decline) but still shows clear degradation.
In contrast, \methodname maintains robust performance across horizons (96.8\% Short, 87.0\% Medium, 92.9\% Long).
Figure~\ref{fig:credit_misattribution}(b) illustrates this pattern on the 7B model through relative improvement over GRPO. On Short tasks, \methodname and GiGPO achieve comparable gains (+13\% vs +11\%). However, the gap widens as horizon extends: on Long tasks, \methodname reaches +39\% while GiGPO remains at +22\%. GiGPO relies on state recurrence for step-level grouping, which diminishes as policies improve and trajectories diversify. 
These results indicate that milestone-anchored credit assignment provides increasing benefit as task horizons extend.

\subsection{Analysis}
\label{sec:analysis}

\begin{figure*}[t]
  \centering
  \includegraphics[width=0.9\textwidth]{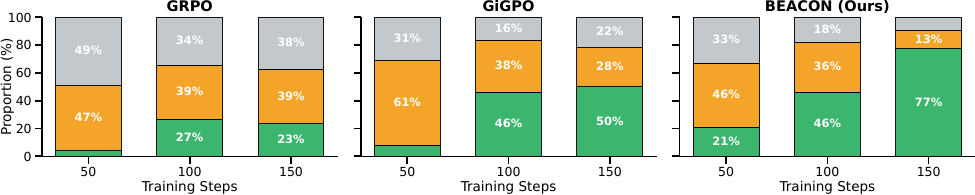}
  \caption{\textbf{Sample Efficiency.} Trajectory distribution during training on ALFWorld. \textcolor{successgreen}{Green}: full successes; \textcolor{partialorange}{Orange}: partial successes (complete $\geq$1 milestone but fail); \textcolor{failgray}{Gray}: complete failures.}
  \label{fig:sample_efficiency}
\end{figure*}

\paragraph{Partial Successes Become Learning Signal.}
We analyze sample efficiency by categorizing trajectories during training into three types: full successes (complete the task), partial successes (complete at least one milestone but fail the final task), and complete failures (achieve no milestone). Figure~\ref{fig:sample_efficiency} shows the distribution on ALFWorld (Qwen2.5-1.5B) across 150 training iterations. 
Under GRPO, 39\% of trajectories at iteration 150 are partial successes that complete at least one milestone but receive zero reward. GiGPO reduces this to 28\% through state-based grouping, but substantial signal remains discarded. \methodname's temporal reward shaping provides positive reward for milestone completion, reducing partial successes to 13\%. Effective sample utilization improves from 23.7\% to 82.0\%, a 3.5$\times$ increase in trajectories providing useful gradient signal.

\begin{figure}[t]
  \centering
  \includegraphics[width=0.95\linewidth]{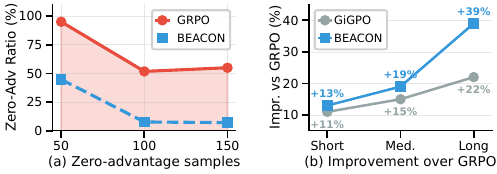}
    \caption{\textbf{Learning Signal and Horizon Scaling.} (a) Zero-Advantage Ratio during training. (b) Relative improvement over GRPO by task horizon.}
  \label{fig:credit_misattribution}
\end{figure}

\paragraph{Gradient Starvation.}
We measure the Zero-Advantage Ratio (ZAR), defined as the fraction of samples receiving near-zero advantage during training. Figure~\ref{fig:credit_misattribution}(a) shows ZAR on ALFWorld. GRPO starts near 100\% ZAR and decreases to around 55\% by iteration 150, indicating that over half of samples provide no learning signal even after extended training. \methodname starts at 45\% ZAR and rapidly decreases to approximately 10\%, confirming that milestone-anchored credit assignment substantially alleviates gradient starvation by extracting signal from partial successes.

\paragraph{Credit Concentration.}
We compute the Credit Concentration Ratio (CCR), defined as the average advantage magnitude for milestone actions divided by that for non-milestone actions. CCR=1 indicates uniform credit; CCR$>$1 indicates concentration on milestones. Figure~\ref{fig:ccr_oracle}(a) shows CCR across methods on ALFWorld (Qwen2.5-1.5B).
GiGPO exhibits the highest CCR (2.36), meaning milestone actions receive 2.36$\times$ more credit than non-milestone actions. GRPO shows moderate concentration (1.37).
\methodname has the lowest CCR (0.84), indicating that non-milestone actions receive slightly more credit than milestone actions. 
Despite lower concentration, \methodname achieves the highest performance. 
This suggests that credit concentration penalizes intermediate actions necessary for reaching milestones. \methodname's temporal decay assigns graduated positive credit to all actions within successful segments, preserving signal for exploratory steps that enable milestone completion.

\begin{figure}[t]
    \centering
    \includegraphics[width=0.9\linewidth]{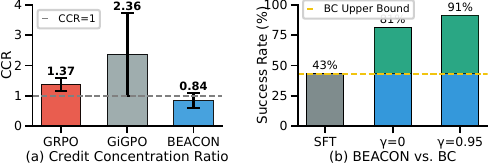}
    \caption{\textbf{Credit Distribution and Policy Optimization.} (a) Credit Concentration Ratio across methods. Higher CCR indicates more aggressive concentration on milestone actions. (b) Comparison with behavior cloning (SFT on oracle trajectories).}
    \label{fig:ccr_oracle}
\end{figure}

\paragraph{Beyond Behavior Cloning.}
A potential concern is whether \methodname degrades to behavior cloning given its use of milestone structure. Figure~\ref{fig:ccr_oracle}(b) compares \methodname against supervised fine-tuning (SFT) on oracle trajectories on ALFWorld (Qwen2.5-1.5B). 
Supervised fine-tuning on oracle trajectories achieves 43\% success rate. \methodname with $\gamma$=0 (milestone reward only) reaches 81\%, demonstrating that milestone-anchored credit assignment alone enables the policy to discover strategies superior to the oracle. Introducing temporal decay ($\gamma$=0.95) further improves performance to 91.4\%.
This confirms that the milestone structure provides credit assignment anchors, but the policy discovers execution strategies superior to the oracle trajectories.

\begin{figure}[t]
    \centering
    \includegraphics[width=0.95\linewidth]{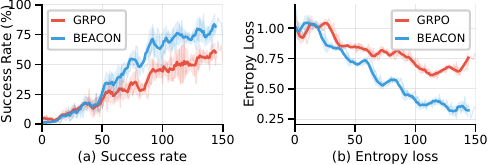}
    \caption{\textbf{Training Dynamics.} (a) Success rate. \methodname converges faster than GRPO. (b) Policy entropy evolution. \methodname exhibits smooth reduction indicating stable refinement.}
    \label{fig:training_dynamics}
\end{figure}

\begin{figure*}[t]
    \centering
    \includegraphics[width=0.9\textwidth]{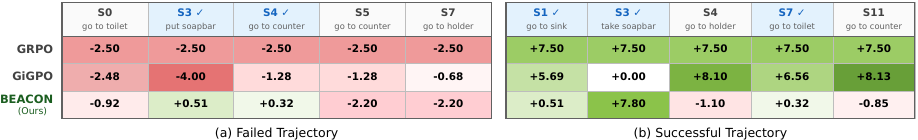}
    \caption{\textbf{Credit Assignment on Representative Trajectories.} (a) Failed trajectory with intermediate milestones. (b) Successful trajectory with detours. GRPO assigns uniform credit to all actions; GiGPO produces counterintuitive assignments due to state-based grouping; \methodname credits milestone completions while appropriately penalizing errors and inefficient detours.}
    \label{fig:case_study}
\end{figure*}

\paragraph{Training Dynamics.}
Figure~\ref{fig:training_dynamics} compares training dynamics on ALFWorld (Qwen2.5-1.5B). \methodname converges faster: it reaches 60\% success rate by iteration 50, while GRPO requires iteration 120 to reach the same threshold. This faster convergence is consistent with \methodname's improved sample utilization (23.7\% to 82.0\%), as more trajectories contribute useful gradient signal per batch. Figure~\ref{fig:training_dynamics}(b) shows policy entropy. \methodname exhibits smooth entropy reduction, while GRPO maintains high entropy throughout. The contrast reflects the difference in gradient quality: \methodname receives consistent feedback from milestone completion, enabling steady policy refinement.

\subsection{Ablation Study}
\label{sec:ablation}

\begin{table}[t]
\centering
\small
 \caption{\textbf{Ablation Study with Qwen2.5-1.5B-Instruct.}}
\begin{tabular}{l cc}
\toprule
\textbf{Variant} & \textbf{ALFWorld} & \textbf{WebShop} \\
\midrule
\textbf{\methodname} & \textbf{91.4} & \textbf{75.6} \\
\midrule
\rowcolor{gray!10} \multicolumn{3}{l}{\textit{Trajectory Partitioning}} \\
\hspace{2mm}w/ 50\% Milestone Dropout & 82.8 & 68.3 \\
\hspace{2mm}w/ Random Partition & 74.2 & 57.6 \\
\midrule
\rowcolor{gray!10} \multicolumn{3}{l}{\textit{Reward Shaping}} \\
\hspace{2mm}w/o Temporal Decay ($\gamma=0$) & 81.2 & 62.1 \\
\hspace{2mm}w/ Uniform Shaping ($\gamma=1$) & 71.8 & 64.1 \\
\midrule
\rowcolor{gray!10} \multicolumn{3}{l}{\textit{Advantage Estimation}} \\
\hspace{2mm}w/o Segment-level Adv & 72.8 & 56.8 \\
\hspace{2mm}w/o Trajectory-level Adv & 23.4 & 67.9 \\
\midrule
GRPO & 72.8 & 56.8 \\
\bottomrule
\end{tabular}
\label{tab:ablation}
\end{table}

\paragraph{Trajectory Partitioning.}
We evaluate degraded partitioning strategies on ALFWorld. Random partitioning (selecting 5 arbitrary positions as milestones) achieves 74.2\%, slightly above GRPO (72.8\%), indicating that segmentation structure itself provides modest benefit. With 50\% milestone dropout, performance degrades gracefully to 82.8\%, still outperforming GRPO by 10\%, indicating that \methodname tolerates imperfect milestone detection. Notably, the gap between random and full milestones (17.2\%) far exceeds the gap between GRPO and random (1.4\%), demonstrating that \methodname's gains stem primarily from exploiting task-inherent structure rather than segmentation alone.

\paragraph{Temporal Reward Shaping.}
Removing temporal decay ($\gamma=0$) reduces performance from 91.4\% to 81.2\% on ALFWorld and from 75.6\% to 62.1\% on WebShop, yet still outperforms GRPO (72.8\% and 56.8\%). This confirms that milestone-anchored structure itself provides significant benefit, while temporal decay contributes additional gains by distinguishing action contributions within segments. Notably, uniform shaping ($\gamma=1$) performs worse than no shaping on ALFWorld (71.8\% vs 81.2\%): assigning equal credit to all actions obscures the distinction between critical and preparatory actions, producing misleading gradients.

\paragraph{Dual-Scale Advantage.}
Removing segment-level advantage naturally degrades \methodname to GRPO (72.8\% on ALFWorld, 56.8\% on WebShop), establishing GRPO as the performance lower bound.
Removing trajectory-level advantage produces different effects across benchmarks: severe degradation on ALFWorld (23.4\%) but reasonable performance on WebShop (67.9\%). This difference reflects task structure. On ALFWorld, segment-level optimization alone can reinforce actions that achieve intermediate milestones but lead to eventual task failure. Trajectory-level feedback provides necessary correction. On WebShop, milestone completion aligns more directly with task success, with segment-level feedback driving the primary improvement while trajectory-level feedback provides additional gains. The dual-scale formulation leverages both signals, achieving robust performance across diverse task structures.

\subsection{Case Study}
\label{sec:case_study}

Figure~\ref{fig:case_study} presents credit assignment on two representative trajectories from ALFWorld. In the failed trajectory, the agent completes milestones S3 and S4 before failing. GRPO assigns uniform negative advantage ($A$=$-$2.50) to all actions. GiGPO produces counterintuitive credit: milestone S3 receives the lowest advantage ($A$=$-$4.00), because state-based grouping compares it against successful trajectories. \methodname credits milestones ($A$=$+$0.51) while penalizing errors. In the successful trajectory with an unnecessary detour at S4, GRPO assigns uniform positive advantage ($A$=$+$7.50). GiGPO rewards the detour most heavily ($A$=$+$8.10). \methodname penalizes the detour ($A$=$-$1.10) while crediting milestones. These examples illustrate how \methodname provides precise credit assignment that distinguishes productive actions from errors and inefficiencies.

\section{Related Work}
Our work relates to policy optimization for language models and credit assignment in reinforcement learning.
\paragraph{Policy Optimization for Language Models.}
PPO~\citep{schulman2017proximalpolicyoptimizationalgorithms, ouyang2022traininglanguagemodelsfollow} is widely used for RLHF but requires a value network that struggles over long horizons. Critic-free methods such as DPO~\citep{Rafailov2023DirectPO}, GRPO~\citep{shao2024deepseekmathpushinglimitsmathematical}, and RLOO~\citep{ahmadian-etal-2024-back} eliminate this overhead and achieve strong results on reasoning tasks~\citep{Guo_2025,yu2025dapoopensourcellmreinforcement}. 
However, when applied to LLM agents for web navigation~\citep{deng2023mind2web,zhou2024webarena,qi2024webrl}, embodied control~\citep{ALFWorld20,wang-etal-2022-scienceworld}, and tool use~\citep{schick2023toolformerlanguagemodelsteach,qin2023toolllmfacilitatinglargelanguage,ragen,zeng-etal-2024-agenttuning,chen2023fireact}, these trajectory-level methods assign identical credit to all actions regardless of individual contribution, causing performance degradation as task horizons extend. \methodname exploits semantic milestones inherent to agentic tasks, enabling segment-level comparison within trajectories.

\paragraph{Credit Assignment and Reward Shaping.}
Existing approaches to finer-grained credit assignment introduce distinct limitations. Auxiliary model methods, including process reward models~\citep{lightman2023letsverifystepstep,wang-etal-2024-math}, utterance-level critics~\citep{zhou2024archer}, implicit reward models~\citep{Cui2025ProcessRT}, and co-evolving verifiers~\citep{pan2026coverrl}, require expensive annotation, risk reward hacking~\citep{Gao2022ScalingLF}, or add training complexity. Monte Carlo methods~\citep{Kazemnejad2024:VinePPO} avoid learned models but incur substantial sampling overhead from multiple rollouts per step. Structure-based methods such as GiGPO~\citep{feng2025groupingrouppolicyoptimizationllm} and RLVMR~\citep{zhang2025rlvmrreinforcementlearningverifiable} exploit repeated states or reasoning patterns for localized comparison, but depend on incidental structure that may be sparse in long-horizon tasks. \methodname instead anchors credit to milestones that directly reflect task progress, providing consistent segment-level comparison without auxiliary models, sampling overhead, or reliance on emergent trajectory patterns.

\section{Conclusion}
\label{sec:conclusion}

We introduced \methodname, a framework that addresses credit misattribution and sample inefficiency in trajectory-level policy optimization for long-horizon language agents. 
\methodname exploits the compositional structure of long-horizon tasks: milestones, observable state transitions indicating subgoal completion, exhibit an approximate Markov property that enables credit to be decoupled across segments. 
By partitioning trajectories at milestone boundaries, applying temporal reward shaping within segments, and estimating advantages at dual scales, \methodname isolates local action quality from downstream variance. 
Experiments on ALFWorld, WebShop, and ScienceWorld demonstrate improvements that amplify as task horizons extend, with effective sample utilization improvement. These results establish milestone-anchored credit assignment as an effective paradigm for training long-horizon language agents.
We further discuss the limitations of \methodname and its future directions in Appendix \ref{app:limitations}.

\section*{Impact Statement}
This paper presents work whose goal is to advance the training of language model agents for long-horizon tasks. The primary societal impact is enabling more capable autonomous agents that can assist humans in complex, multi-step tasks such as web navigation, household management, and scientific experimentation. While improved agent capabilities could increase productivity and accessibility, they also raise considerations around automation of tasks currently performed by humans. Our method does not introduce new capabilities beyond existing language models but rather improves the efficiency of training agents on tasks with sparse rewards. We do not anticipate specific negative societal consequences beyond those generally associated with advances in language model agents.

\bibliography{references}
\bibliographystyle{icml2026}

\clearpage
\appendix

\section{Theoretical Analysis}
\label{app:theory}
This appendix provides formal analysis supporting the design of \methodname, establishing that segment-level advantages isolate local action quality from downstream variance.

\subsection{Variance Isolation in Segment-Level Advantages}
\label{app:variance_isolation}

The foundation of \methodname's credit assignment is the structural assumption that milestone states are approximately Markovian.
\begin{assumption}[Milestone Markov Property]\label{assump:markov}
For milestone state $s_{t_k}$ reached at timestep $t_k$:
\begin{equation}
\begin{split}
P(\textup{Seg}_{k+1}, \ldots, \textup{Seg}_{K+1} \mid s_{t_k}, \textup{Seg}_1, \ldots, \textup{Seg}_k) \\
\approx P(\textup{Seg}_{k+1}, \ldots, \textup{Seg}_{K+1} \mid s_{t_k}).
\end{split}
\end{equation}
\end{assumption}

This assumption is natural for compositional tasks: once a subgoal is achieved (e.g., an object is picked up), subsequent success depends on completing remaining subgoals, not on how previous subgoals were achieved.

\begin{proposition}[Variance Isolation]
\label{prop:var_isolation}
Under Assumption~\ref{assump:markov}, for trajectories in comparison group $\mathcal{G}_k = \{i : K_i \geq k\}$:
\begin{equation}
\textup{Cov}_{i \in \mathcal{G}_k}(A^{\textup{seg}}_{i,t}, R_{k'}^{(i)}) \approx 0, \quad \forall i \in \mathcal{G}_k,\, \forall t \in \textup{Seg}_k^{(i)},\, \forall k' > k.
\end{equation}
\end{proposition}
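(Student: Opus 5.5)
The plan is to show that, within $\mathcal{G}_k$, the segment-level advantage $A^{\textup{seg}}_{i,t}$ is a function of segment $k$ alone (together with a group-level constant), and that, conditional on reaching the milestone state $s_{t_k}$, later segment returns are approximately independent of segment $k$ by Assumption~\ref{assump:markov}. Vanishing covariance then follows.

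\textbf{Step 1: measurability of $A^{\textup{seg}}_{i,t}$.}
For $i \in \mathcal{G}_k$ and $t \in \textup{Seg}_k^{(i)}$, we have $K_i \geq k$. The shaped reward is therefore $r_t = R_{\text{ms}}\gamma^{t_k - t}$, which depends only on the position of $t$ within $\textup{Seg}_k^{(i)}$, i.e.\ on the length and timing of segment $k$.

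The baseline $b_k = \frac{1}{|\mathcal{G}_k|}\sum_{j\in\mathcal{G}_k} R_k^{(j)}/|\textup{Seg}_k^{(j)}|$ is common to all members of the group. It depends only on segment-$k$ data of the group members, since $R_k^{(j)}$ is a sum over $\textup{Seg}_k^{(j)}$. Hence $A^{\textup{seg}}_{i,t} = f(\textup{Seg}_k^{(i)}) - b_k$.

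Because the covariance is taken across $i \in \mathcal{G}_k$, the shared baseline $b_k$ acts as a constant and drops out. This gives
\[
\textup{Cov}_{i\in\mathcal{G}_k}\bigl(A^{\textup{seg}}_{i,t}, R_{k'}^{(i)}\bigr) = \textup{Cov}_{i\in\mathcal{G}_k}\bigl(f(\textup{Seg}_k^{(i)}), R_{k'}^{(i)}\bigr).
\]

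\textbf{Step 2: conditioning on the milestone state.}
Apply the law of total covariance, conditioning on $s_{t_k}$:
\[
\textup{Cov}(X,Y) = \mathbb{E}\bigl[\textup{Cov}(X,Y\mid s_{t_k})\bigr] + \textup{Cov}\bigl(\mathbb{E}[X\mid s_{t_k}], \mathbb{E}[Y\mid s_{t_k}]\bigr),
\]
with $X = f(\textup{Seg}_k)$ and $Y = R_{k'}$ for $k' > k$.

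$R_{k'}$ is a function of $\textup{Seg}_{k'}$, and $K$ is determined by the future trajectory. Assumption~\ref{assump:markov} gives that the law of $(\textup{Seg}_{k+1},\dots)$ given $(s_{t_k},\textup{Seg}_1,\dots,\textup{Seg}_k)$ approximately equals its law given $s_{t_k}$. So $Y$ is approximately conditionally independent of $X$ given $s_{t_k}$, and the first term is $\approx 0$.

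To control the approximation, I will bound the conditional covariance by $\|f\|_\infty \|R_{k'}\|_\infty$ times the total-variation gap in the assumption. Both quantities are bounded: $0 < f \le R_{\text{ms}}$, and $R_{k'}^{(i)} \le R_{\text{ms}}/(1-\gamma)$.

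\textbf{Step 3: the second term, which is the main obstacle.}
The second term, $\textup{Cov}\bigl(\mathbb{E}[X\mid s_{t_k}],\mathbb{E}[Y\mid s_{t_k}]\bigr)$, does not vanish from the assumption alone if different trajectories reach milestone $k$ in different states. I would handle it by arguing that trajectories in $\mathcal{G}_k$ share the same task, since the group is sampled for one task instance. They therefore reach the same semantic milestone state $s_{t_k}$, up to irrelevant history (object in hand, etc.).

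I would formalize this by treating the milestone state as the same abstract state $\bar s_k$ across the group, i.e.\ the Markov assumption is read with respect to the milestone abstraction. The variable $s_{t_k}$ is then degenerate within $\mathcal{G}_k$ and the second term is exactly zero.

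If that identification feels too strong, the fallback is to state the result with an explicit residual term, bounded by the variability of milestone states within the group. The $\approx$ in the statement is then interpreted as covering both this residual and the Markov approximation error.

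Combining Steps 1 through 3 yields the claimed $\textup{Cov}_{i\in\mathcal{G}_k}(A^{\textup{seg}}_{i,t},R_{k'}^{(i)})\approx 0$ for all $k'>k$.
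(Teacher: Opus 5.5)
Your skeleton matches the paper's. The paper also writes $A^{\textup{seg}}_{i,t}=r_t-\bar b_k$, drops the group-constant baseline from the covariance, and invokes Assumption~\ref{assump:markov} to decouple segment-$k$ quantities from $R_{k'}^{(i)}$ given $s_{t_k}$.

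Where you differ is the law-of-total-covariance split. The paper goes straight from conditional independence given $s_{t_k}$ to $\mathbb{E}[r_t R_{k'}^{(i)}\mid i\in\mathcal{G}_k]\approx\mathbb{E}[r_t\mid i\in\mathcal{G}_k]\,\mathbb{E}[R_{k'}^{(i)}\mid i\in\mathcal{G}_k]$, conditioning only on group membership. That step silently discards your between-state term $\textup{Cov}(\mathbb{E}[X\mid s_{t_k}],\mathbb{E}[Y\mid s_{t_k}])$. So the paper implicitly relies on the same identification of milestone states across $\mathcal{G}_k$ that you make explicit in Step 3.

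Your route therefore gives two things the paper's does not: a quantitative control of the Markov error, and an honest account of what is needed beyond the assumption. Your bounds $r_t\le R_{\text{ms}}$ and $R_{k'}^{(i)}\le R_{\text{ms}}/(1-\gamma)$ are correct.

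The one thing to change is the justification in Step 3. Sharing a task instance does not make the concrete states at $t_k$ coincide: agent location, opened receptacles, and elapsed steps under the cap $T$ all vary. Assumption~\ref{assump:markov} is stated for the concrete $s_{t_k}$, not an abstract milestone.

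For example, suppose the step count is part of the state, so the assumption can hold exactly. Then $\mathbb{E}[r_t\mid s_{t_k}]=R_{\text{ms}}\gamma^{t_k-t}$ falls as $t_k$ grows. A later $t_k$ also leaves less budget for reaching later milestones, so $\mathbb{E}[R_{k'}^{(i)}\mid s_{t_k}]$ falls too, and the between-state term is typically nonzero.

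So present the abstraction $\bar s_k$ as an explicit additional hypothesis, or keep your residual term. Either choice gives a correct and more transparent version of the result than the paper's.
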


\begin{proof}
For trajectories in $\mathcal{G}_k$, the per-step segment-level advantage is:
\begin{equation}
A^{\text{seg}}_{i,t} = r_t - \bar{b}_k, \quad \text{where } \bar{b}_k = \frac{1}{|\mathcal{G}_k|} \sum_{j \in \mathcal{G}_k} \frac{R_k^{(j)}}{|\text{Seg}_k^{(j)}|}.
\end{equation}

For $t \in \text{Seg}_k^{(i)}$, the shaped reward $r_t$ depends only on the position within segment $k$ (through $t_k^{(i)} - t$) and on actions $\{a_{t'} : t' \in \text{Seg}_k^{(i)}\}$, which occur before milestone $k$ is reached. For $k' > k$, the segment return $R_{k'}^{(i)}$ depends only on actions $\{a_t : t \in \text{Seg}_{k'}^{(i)}\}$, which occur after milestone $k$ is reached.

By Assumption~\ref{assump:markov}, conditioned on the milestone state $s_{t_k}$, the actions in segment $k'$ are independent of the actions in segment $k$:
\begin{equation}
\mathbb{E}[r_t \cdot R_{k'}^{(i)} \mid i \in \mathcal{G}_k] \approx \mathbb{E}[r_t \mid i \in \mathcal{G}_k] \cdot \mathbb{E}[R_{k'}^{(i)} \mid i \in \mathcal{G}_k].
\end{equation}

Since $\bar{R}_k$ is constant over $\mathcal{G}_k$:
\begin{equation}
\begin{aligned}
\text{Cov}(A^{\text{seg}}_{i,k}, R_{k'}^{(i)}) &= \text{Cov}(R_k^{(i)} - \bar{R}_k, R_{k'}^{(i)}) \\
&= \text{Cov}(R_k^{(i)}, R_{k'}^{(i)}) \approx 0.
\end{aligned}
\end{equation}
\end{proof}

This result establishes that segment-level advantages isolate local action quality from downstream variance: the gradient for actions in segment $k$ is not affected by outcomes in later segments, directly addressing credit misattribution.

\subsection{Discussion of Assumptions}
\label{app:assumptions}

The Milestone Markov Property (Assumption~\ref{assump:markov}) is central to the variance isolation guarantee. This assumption holds well when milestone states encode complete subgoal achievement and future success depends primarily on remaining subgoals rather than execution details of past subgoals.

The assumption may be approximate when resources carry across segments (e.g., inventory limits) or when execution efficiency affects future success (e.g., time constraints). However, even when the Markov property is only approximately satisfied, \methodname provides empirical benefits: partial successes still contribute gradient signal through shaped rewards, and segment-level comparison reduces downstream variance even if it does not fully eliminate it. The trajectory-level advantage component maintains task alignment regardless of the Markov property. The experimental results in Section~\ref{sec:experiments} demonstrate substantial improvements on tasks where the assumption is only approximately satisfied.

\section{Task-wise Analysis on ALFWorld}
\label{app:taskwise}
  \begin{table*}[t]
    \centering
    \small
    \setlength{\tabcolsep}{10pt}
    \caption{\textbf{ALFWorld Task-wise Results.} Success rate (\%) on each task type.}
    \begin{tabular}{@{}ll*{6}{>{\centering\arraybackslash}p{0.9cm}}c@{}}
    \toprule
    \multirow{2}{*}{\textbf{Type}} & \multirow{2}{*}{\textbf{Method}} & \multicolumn{7}{c}{\textbf{ALFWorld}} \\
    \cmidrule(lr){3-9}
    & & Pick & Look & Clean & Heat & Cool & Pick2 & All \\
    \midrule
    \rowcolor{gray!10} \multicolumn{9}{l}{\textit{\textbf{Closed-Source Models}}} \\
    Prompting & GPT-4o & 75.3 & 60.8 & 31.2 & 56.7 & 21.6 & 49.8 & 48.0 \\
    Prompting & Gemini-2.5-Pro & 92.8 & 63.3 & 62.1 & 69.0 & 26.6 & 58.7 & 60.3 \\
    \midrule

    \rowcolor{gray!10} \multicolumn{9}{l}{\textit{\textbf{Base: Qwen2.5-1.5B-Instruct}}} \\
    Prompting & Direct Prompt & 5.9 & 5.5 & 3.3 & 9.7 & 4.2 & 0.0 & 4.1 \\
    Prompting & ReAct & 17.4 & 20.5 & 15.7 & 6.2 & 7.7 & 2.0 & 12.8 \\
    Prompting & Reflexion & 35.3 & 22.2 & 21.7 & 13.6 & 19.4 & 3.7 & 21.8 \\
    RL Training & PPO & 64.8 & 40.5 & 57.1 & 60.6 & 46.4 & 47.4 & 54.4 \\
    RL Training & RLOO & 88.3 & 52.8 & 71.0 & 62.8 & 66.4 & 56.9 & 69.7 \\
    RL Training & GRPO & 85.3 & 53.7 & 84.5 & 78.2 & 59.7 & 53.5 & 72.8 \\
    RL Training & GiGPO & 96.0 & 76.5 & \textbf{91.8} & 91.3 & 71.7 & 79.5 & 86.1 \\
    \rowcolor{lightblue!50} RL Training & \textbf{\methodname (Ours)} & \textbf{100} & \textbf{88.2} & 86.7 & \textbf{100} & \textbf{78.9} & \textbf{92.9} & \textbf{91.4} \\
    \rowcolor{red!8} & \textit{$\Delta$ vs GRPO} & +14.7 & +34.5 & +2.2 & +21.8 & +19.2 & +39.4 & +18.6 \\
    \midrule

    \rowcolor{gray!10} \multicolumn{9}{l}{\textit{\textbf{Base: Qwen2.5-7B-Instruct}}} \\
    Prompting & Direct Prompt & 33.4 & 21.6 & 19.3 & 6.9 & 2.8 & 3.2 & 14.8 \\
    Prompting & ReAct & 48.5 & 35.4 & 34.3 & 13.2 & 18.2 & 17.6 & 31.2 \\
    Prompting & Reflexion & 62.0 & 41.6 & 44.9 & 30.9 & 36.3 & 23.8 & 42.7 \\
    RL Training & PPO & 92.3 & 64.0 & 92.5 & 89.5 & 80.3 & 68.8 & 80.4 \\
    RL Training & RLOO & 87.6 & 78.2 & 87.3 & 81.3 & 71.9 & 48.9 & 75.5 \\
    RL Training & GRPO & 90.8 & 66.1 & 89.3 & 74.7 & 72.5 & 64.7 & 77.6 \\
    RL Training & GiGPO & 97.7 & \textbf{82.7} & \textbf{98.8} & 83.7 & 89.3 & 79.2 & 90.8 \\
    \rowcolor{lightblue!50} RL Training & \textbf{\methodname (Ours)} & \textbf{100} & 81.8 & 96.3 & \textbf{92.9} & \textbf{94.7} & \textbf{90.0} & \textbf{94.5}\\
    \rowcolor{red!8} & \textit{$\Delta$ vs GRPO} & +9.2 & +15.7 & +7.0 & +18.2 & +22.2 & +25.3 & +16.9 \\
    \bottomrule
    \end{tabular}
    \label{tab:alfworld_taskwise}
  \end{table*}

We report the success rates of different methods across all six ALFWorld task types in Table~\ref{tab:alfworld_taskwise}. The table presents results for both Qwen2.5-1.5B and Qwen2.5-7B base models. \methodname consistently outperforms other methods on both model scales, with particularly strong gains on Pick2 (+13\% on 1.5B, +11\% on 7B), which requires locating and picking up two separate objects and thus involves more milestones for credit assignment. Notably, \methodname-trained 1.5B models (91.4\%) substantially outperform GPT-4o (48.0\%) and Gemini-2.5-Pro (60.3\%), demonstrating that task-specific training with proper credit assignment can surpass general-purpose large models.

\section{Experimental Details}
\label{app:exp_details}

\subsection{Benchmark Descriptions}
\label{app:benchmarks}

We evaluate \methodname on three diverse benchmarks spanning embodied reasoning, web navigation, and scientific experimentation.

\paragraph{ALFWorld.}
ALFWorld~\citep{ALFWorld20} is a text-based embodied reasoning benchmark that aligns TextWorld~\citep{Ct2018TextWorldAL} environments with ALFRED~\citep{shridhar2020alfred} visual tasks. The benchmark comprises six task types: \texttt{PICK} (pick up an object), \texttt{CLEAN} (clean an object), \texttt{HEAT} (heat an object), \texttt{COOL} (cool an object), \texttt{LOOK} (examine an object under light), and \texttt{PICK2} (pick up two objects). Tasks require agents to navigate household environments and manipulate objects through natural language commands. We use the standard train/validation/test split with 3,321/140/140 tasks respectively. Following prior work, we stratify tasks by optimal trajectory length: Short ($L^* \leq 4$), Medium ($5 \leq L^* \leq 7$), and Long ($L^* > 7$).

\paragraph{WebShop.}
WebShop~\citep{Yao2022WebShopTS} is a simulated e-commerce environment containing 1.18 million real-world products and 12,087 human instructions. Agents must navigate web pages through search, filtering, and clicking actions to purchase products matching natural language specifications. The benchmark tests compositional understanding of product attributes including color, size, price constraints, and feature requirements. We use the standard evaluation protocol with 500 test instructions and report both Score (partial credit based on attribute matching) and Success Rate (binary task completion).

\paragraph{ScienceWorld.}
ScienceWorld~\citep{wang-etal-2022-scienceworld} presents 30 scientific reasoning tasks requiring agents to conduct virtual experiments, such as measuring melting points, testing electrical conductivity, and identifying life stages of organisms. Tasks involve long action sequences frequently exceeding 30 steps, with complex dependencies between sub-experiments. The environment provides explicit subgoal feedback that our milestone detector directly consumes. We report both Score (normalized progress) and Success Rate across all 30 task types.

\subsection{Diagnostic Metrics}
\label{app:metrics}

We introduce two metrics to quantify credit assignment quality in policy optimization.

\paragraph{Contradictory Action Ratio (CAR).}
For a batch of trajectories, let $\mathcal{S}_{\text{shared}}$ denote the set of state-action pairs
$(s, a)$ that appear in multiple trajectories. For each $(s, a) \in \mathcal{S}_{\text{shared}}$, let
$A^+$ and $A^-$ denote the number of trajectories where this pair receives positive and negative
advantages, respectively. The CAR is defined as:
\begin{equation}
\text{CAR} = \frac{1}{|\mathcal{S}_{\text{shared}}|} \sum_{(s,a) \in \mathcal{S}_{\text{shared}}}
\mathbb{I}[A^+ >0 \land A^- > 0],
\end{equation}
where $\mathbb{I}[\cdot]$ is the indicator function. CAR measures the fraction of repeated
state-action pairs receiving contradictory gradient signals.

\paragraph{Effective Gradient Ratio (EGR).}
For each state-action pair $(s, a) \in \mathcal{S}_{\text{shared}}$, let $g^+$ and $g^-$ denote the
sum of positive and negative advantage magnitudes, respectively. The EGR is defined as:
\begin{equation}
\text{EGR} = \frac{\sum_{(s,a) \in \mathcal{S}_{\text{shared}}} |g^+ - g^-|}{\sum_{(s,a) \in
\mathcal{S}_{\text{shared}}} (g^+ + g^-)}.
\end{equation}
EGR measures the proportion of gradient magnitude that survives after cancellation from contradictory
signals. An EGR of 1.0 indicates fully consistent gradients, while lower values indicate greater
cancellation.

\subsection{Implementation Details}
\label{app:implementation}

All experiments are conducted using the veRL framework~\citep{sheng2024hybridflow} with vLLM~\citep{Kwon2023EfficientMM} for efficient inference. We use 8 NVIDIA A100 80GB GPUs for training. Gradient checkpointing is enabled to reduce memory consumption. The reference model uses CPU parameter offloading while the actor model remains fully on GPU. Training 150 iterations takes approximately 10 hours for ALFWorld and ScienceWorld, and 8 hours for WebShop.

For all group-based methods (GRPO, GiGPO, \methodname), we use identical base configurations to ensure fair comparison. The only differences are in the advantage computation mechanisms specific to each method. All experiments use a fixed random seed (seed=0). Evaluation is conducted on 128 samples per checkpoint.

\subsection{Hyperparameters}
\label{app:hyperparams}

\begin{table}[h]
  \centering
  \small
  \caption{\textbf{Hyperparameters.} \methodname-specific parameters control milestone-anchored credit
  assignment; other parameters are shared across all group-based methods (GRPO, GiGPO, \methodname) for fair
  comparison. For environment-specific values, we report ALFWorld / WebShop / ScienceWorld.}
  \begin{tabular}{lcc}
  \toprule
  \textbf{Hyperparameter} & \textbf{Symbol} & \textbf{Value} \\
  \midrule
  \multicolumn{3}{l}{\textit{\methodname-specific}} \\
  Segment advantage weight & $\lambda$ & 1.0 \\
  Temporal decay factor & $\gamma$ & 0.95 \\
  \midrule
  \multicolumn{3}{l}{\textit{Optimization}} \\
  Learning rate & -- & $1 \times 10^{-6}$ \\
  PPO clip ratio & $\epsilon$ & 0.2 \\
  Gradient clip norm & -- & 1.0 \\
  Entropy coefficient & -- & 0.001 \\
  KL penalty coefficient & $\beta$ & 0.01 \\
  \midrule
  \multicolumn{3}{l}{\textit{Batch Configuration}} \\
  Prompts per iteration & -- & 16 \\
  Rollouts per prompt & $G$ & 8 \\
  PPO mini-batch size & -- & 256 \\
  \midrule
  \multicolumn{3}{l}{\textit{Sequence}} \\
  Max prompt length & -- & 7000 \\
  Max response length & -- & 512 \\
  Temperature (train / eval) & -- & 1.0 / 0.4 \\
  \midrule
  \multicolumn{3}{l}{\textit{Environment}} \\
  Max steps per episode & $T$ & 30 / 15 / 30 \\
  Total training iterations & -- & 150 \\
  \bottomrule
  \end{tabular}
  \label{tab:hyperparams}
\end{table}

Table~\ref{tab:hyperparams} presents the hyperparameters used in our experiments. \methodname-specific parameters are listed separately from general training parameters shared across all methods.

\section{Limitations and Future Work}
\label{app:limitations}
\paragraph{Milestone Detection.}
\methodname relies on a task-specific milestone detector $\Phi$
that identifies subgoal completions from environment feedback. In our experiments, milestones are extracted through pattern matching on environment responses (ALFWorld), page transitions (WebShop), or explicit subgoal signals (ScienceWorld). This approach requires domain knowledge to design appropriate detectors and may not generalize to environments without clear subgoal structure or verifiable state transitions. Developing automated milestone discovery methods, potentially through learning or leveraging large language models to identify semantically meaningful progress, remains an important open problem.

\paragraph{Milestone Granularity.}
The effectiveness of \methodname depends on milestones occurring at an appropriate granularity. If milestones are too sparse, \methodname approaches trajectory-level optimization; if too dense, the segment-level advantages may become noisy. Our experiments use naturally occurring task milestones without tuning granularity, but optimal milestone density likely varies across tasks. Investigating adaptive or hierarchical milestone structures could further improve performance.

\paragraph{Benchmark Scope.}
We evaluate \methodname on three benchmarks spanning embodied reasoning, web navigation, and scientific experimentation. While these cover diverse agent capabilities, all involve discrete action spaces and text-based interaction. The applicability of milestone-anchored credit assignment to continuous control, multi-agent settings, or tasks with less compositional structure remains unexplored.

\end{document}